\renewcommand{\paragraph}[1]{\vspace{0em} \noindent  \textbf{#1}} 
\def\BState{\State\hskip-\ALG@thistlm}
\newtheorem{theorem}{Theorem}[section]
\title{Model Compression via \\ Distillation and Quantization}
\author{Antonio Polino \\
ETH Z\"urich\\
\texttt{antonio.polino1@gmail.com} \\
\And
Razvan Pascanu \\
Google DeepMind \\
\texttt{razp@google.com}
\And
Dan Alistarh \\
IST Austria \\
\texttt{dan.alistarh@ist.ac.at} \\
}
\newcommand{\razpt}[1]{\textcolor{magenta}{#1}}
\newcommand{\razp}[1]{\textcolor{magenta}{\bf \small [#1 -- razp]}}
\newcommand{\dan}[1]{\textcolor{blue}{#1}}
\newcommand{\antp}[1]{\textcolor{red}{\bf \small [#1 -- antp]}}
\begin{document}

\maketitle

\begin{abstract}

Deep neural networks (DNNs) continue to make significant advances, solving tasks from image classification to translation or reinforcement learning. One aspect of the field receiving considerable attention is efficiently executing deep models in resource-constrained environments, such as mobile or embedded devices. This paper focuses on this problem, and proposes two new compression methods, which jointly leverage \emph{weight quantization} and \emph{distillation} of larger networks, called ``teachers,'' into compressed ``student" networks. The first method we propose is called \emph{quantized distillation} and leverages distillation during the training process, by incorporating distillation loss, expressed with respect to the teacher network, into the training of a smaller student network whose weights are quantized to a limited set of levels. 
The second method,  \emph{differentiable quantization}, optimizes the \emph{location of quantization points} through stochastic gradient descent, to better fit the behavior of the teacher model. 
We validate both methods through experiments on convolutional and recurrent architectures. We show that quantized shallow students can reach similar accuracy levels to state-of-the-art full-precision teacher models, while providing up to order of magnitude compression, and inference speedup that is almost linear in the depth reduction. In sum, our results enable DNNs for resource-constrained environments to leverage architecture and accuracy advances developed on more powerful devices.

 \end{abstract}
% \dan{abstract is a bit long}
\section{Introduction}

\paragraph{Background.} 
Neural networks are extremely effective for solving several real world problems, like image classification \citep{krizhevsky2012imagenet, he2016deep}, translation \citep{vaswani2017attention}, voice synthesis~\citep{oord2016wavenet} or reinforcement learning~\citep{mnih2013playing, silver2016mastering}. At the same time, modern neural network architectures are often compute, space and power hungry, typically requiring powerful GPUs to train and evaluate. 
The debate is still ongoing on whether large models are \emph{necessary} for good accuracy. It is known that individual network weights can be redundant, and may not carry significant information, e.g.~\cite{DBLP:journals/corr/HanMD15}. 
At the same time, large models often have the ability to completely memorize datasets~\citep{zhang2016understanding}, yet they do not, but instead appear to learn generic task solutions. A standing hypothesis for why overcomplete representations are necessary is that they make learning possible by transforming local minima into saddle points~ %\citep{DBLP:journals/corr/PascanuDGB14}, 
\citep{DBLP:journals/corr/DauphinPGCGB14} %\antp{which paper were you referring to? } 
%\citep{dauphin16Saddle},
or to discover \emph{robust} solutions, which do not rely on precise weight values~\citep{hochreiter1997flat, keskar2016large}.
%\razp{Thanks for finding the citation based on my very vague indications ! I will check them on Friday for correctness (e.g. Dauphin is NIPS 14 not just arxiv).}

If large models are only needed for robustness during training, then significant compression of these models should be achievable, without impacting accuracy. 
This intuition is strengthened by two related, but slightly different research directions. 
The first direction is the work on  \emph{training quantized neural networks}, e.g.~\cite{DBLP:journals/corr/CourbariauxBD15,DBLP:journals/corr/RastegariORF16,DBLP:journals/corr/HubaraCSEB16, wu2016quantized, mellempudi2017ternary, ott2016recurrent, DBLP:journals/corr/ZhuHMD16}, which showed that neural networks can converge to good task solutions even when weights are constrained to having values from a set of integer levels. 
The second direction aims to \emph{compress} already-trained models, while preserving their accuracy. To this end, various elegant compression techniques have been proposed, e.g.~\cite{DBLP:journals/corr/HanMD15, iandola2016squeezenet, wen2016learning, gysel2016hardware, DBLP:journals/corr/abs-1709-01134}, which combine quantization, weight sharing, and careful coding of network weights, to reduce the size of state-of-the-art deep models by orders of magnitude, while at the same time speeding up inference. 

Both these research directions are extremely active, and have been shown to yield significant compression and accuracy improvements, which can be crucial when making such models available on embedded devices or phones. 
However, the literature on compressing deep networks focuses almost exclusively on finding good compression schemes for a given model, without significantly altering the \emph{structure of the model}. 
%
%focuses on quantizing the weights of the model from 64-bits to much less without altering the structure of the original model. 
On the other hand, recent parallel work~\citep{DBLP:journals/corr/BaC13, 2015arXiv150302531H} introduces the process of \emph{distillation}, which can be used for transferring the behaviour of a given model to any other structure. This can be used for compression, e.g. to obtain compact representations of ensembles \citep{2015arXiv150302531H}. However the size of the student model needs to be large enough for allowing learning to succeed. 
A model that is too shallow, too narrow, or which misses necessary units, can result in considerable loss of accuracy~\citep{2016arXiv160305691U}.

In this work, we examine whether distillation and quantization can be jointly leveraged for better compression. 
We start from the intuition that 1) the existence of highly-accurate, full-precision teacher models should be leveraged to improve the performance of quantized models, while 2) quantizing a model can provide better compression than a distillation process attempting the same space gains by purely decreasing the number of layers or layer width. While our approach is very natural, interesting research questions arise when these two ideas are combined.

\paragraph{Contribution.}  We present two methods which allow the user to compound compression in terms of \emph{depth}, by distilling a \emph{shallower student} network with similar accuracy to a \emph{deeper teacher} network, with compression in terms of \emph{width}, by quantizing the weights of the student to a limited set of integer levels, and using less weights per layer. 
The basic idea is that quantized models can leverage \emph{distillation loss}~\citep{2015arXiv150302531H}, the weighted average between the correct targets (represented by the labels) and soft targets (represented by the teacher's outputs).

We implement this intuition via two different methods. 
The first, called \emph{quantized distillation}, aims to leverage distillation loss during the training process, by incorporating it into the training of a student network whose weights are constrained to a limited set of levels. The second method,  which we call \emph{differentiable quantization}, takes a different approach, by attempting to converge to the \emph{optimal location of quantization points} through stochastic gradient descent. 
We validate both methods empirically through a range of experiments on convolutional and recurrent network architectures. We show that quantized shallow students can reach similar accuracy levels to full-precision and deeper teacher models on datasets such as CIFAR and ImageNet (for image classification) and OpenNMT and WMT (for machine translation), while providing up to order of magnitude compression, and inference speedup that is linear in the depth. \footnote{Source code available at \url{https://github.com/antspy/quantized_distillation}}

\paragraph{Related Work.} Our work is a special case of \emph{knowledge distillation}~\citep{DBLP:journals/corr/BaC13, 2015arXiv150302531H}, in which we focus on techniques to obtain high-accuracy students that are both quantized and shallower. More generally, it can be seen as a special instance of \emph{learning with privileged information}, e.g.~\cite{vapnik2015learning, xu2016simple}, in which the student is provided additional information in the form of outputs from a larger, pre-trained model. The idea of optimizing the locations of quantization points during the learning process, which we use in differentiable quantization, has been used previously in~\cite{lan2014matrix, koren2011ordrec, zhang2017zipml}, although in the different context of matrix completion and recommender systems. 

Using distillation for size reduction is mentioned in~\cite{2015arXiv150302531H}, for distilling ensembles. To our knowledge, the only other work using distillation in the context of quantization is~\cite{DBLP:journals/corr/Wu16b}, which uses it to improve the accuracy of binary neural networks on ImageNet. We significantly refine this idea, as we  match or even improve the accuracy of the original full-precision model: for example, our 4-bit quantized version of ResNet18 has \emph{higher accuracy} than full-precision ResNet18 (matching the accuracy of the ResNet34 teacher): it has higher top-1 accuracy (by \textgreater 15\%) and top-5 accuracy (by \textgreater 7\%) compared to the most accurate model in~\cite{DBLP:journals/corr/Wu16b}. 

\vspace{-1em}
\section{Preliminaries}
\label{sec:preliminaries}
\subsection{The Quantization Process}

We start by defining a \emph{scaling function} $sc : \mathcal{R}^n \rightarrow [0, 1]$, which normalizes vectors whose values come from an arbitrary range, to vectors whose values are in $[0, 1]$. Given such a function, the general structure of the quantization functions is as follows: 

\begin{equation}
Q(v) = sc^{-1}\left(\hat Q\left(sc(v)\right)\right),
\end{equation}

where $sc^{-1}$ is the inverse of the scaling function, and $\hat Q$ is the actual quantization function that only accepts values in $[0, 1]$. 
We always assume $v$ to be a vector; in practice, of course, the weight vectors  can be multi-dimensional, but we can reshape them to one dimensional vectors and restore the original dimensions after the quantization.

\paragraph{Scaling.} There are various specifications for the scaling function; in this paper, we will use \textit{linear scaling}, e.g.~\cite{DBLP:journals/corr/HeWZWYZZ16}, that is
$sc(v) = \frac{v-\beta}{\alpha},$
with $\alpha = \max_i v_i - \min_i v_i$ and $\beta = \min_i v_i$ which results in the target values being in $[0, 1]$, and the quantization function

\begin{equation}
Q(v) = \alpha\hat Q\left(\frac{v-\beta}{\alpha}\right) + \beta.
\end{equation}

\paragraph{Bucketing.} One problem with this formulation is that an identical scaling factor is used for the whole vector, whose dimension might be huge. Magnitude imbalance can result in a significant loss of precision, where most of the elements of the scaled vector are pushed to zero. 
To avoid this, we will use \emph{bucketing}, e.g.~\cite{alistarh2016qsgd}, that is, we will apply the scaling function separately to buckets of consecutive values of a certain fixed size. 
The trade-off here is that we obtain better quantization accuracy for each bucket, but will have to store two floating-point scaling factors for each bucket. We characterize the compression comparison in Section \ref{compression_section}.
The function $\hat Q$ can also be defined in several ways. We will consider both uniform and non-uniform placement of quantization points.

\paragraph{Uniform Quantization.}
\label{uniform_quant}
We fix a parameter $s \geq 1$, describing the number of quantization levels employed. Intuitively, uniform quantization considers $s+1$ equally spaced points between $0$ and $1$ (including these endpoints). The deterministic version will assign each (scaled) vector coordinate $v_i$ to the closest  quantization point, while in the stochastic version we perform rounding probabilistically, such that the resulting value is an unbiased estimator of $v_i$, of minimal variance. 

Formally, the uniform quantization function with $s+1$ levels is defined as 

\begin{equation}
 \hat Q(v, s)_i = \frac{\lfloor v_is \rfloor}{s} + \frac{\xi_i}{s}, 
\end{equation}

\noindent where $\xi_i$ is the rounding function. 
For the deterministic version, we define $k_i = sv_i - \lfloor v_is \rfloor$ and set 

\begin{equation}
\xi_i = \begin{cases} 1, & \text{if $k_i > \frac 12$} \\ 0, & \text{otherwise}, \end{cases}
\end{equation}

\noindent while for the stochastic version we will set $\xi_i \sim Bernoulli(k_i)$.  Note that $k_i$ is the normalized distance between the original point $v_i$ and the closest quantization point that is smaller than $v_i$ and that the vector components are quantized independently.

\paragraph{Non-Uniform Quantization.}
 \label{non_uniform_quant}
Non-uniform quantization takes as input a set of $s$ quantization points $\{p_1, \dots, p_s\}$ and quantizes each element $v_i$ to the closest of these points. 
For simplicity, we only define the \emph{deterministic} version of this function.

\subsection{Stochastic Quantization is Equivalent to Adding Gaussian Noise}
\label{sec:noise}
In this section we list some interesting mathematical properties of the uniform quantization function. Clearly, stochastic uniform quantization is an unbiased estimator of its input, i.e. $E[Q(v)] = v$. What interests us is applying this function to neural networks; as the scalar product is the most common operation performed by neural networks, we would like to study the properties of $Q(v)^Tx$, where $v$ is the weight vector of a certain layer in the network and $x$ are the inputs. We are able to show that 

\begin{equation}
Q(v)^Tx = v^Tx + \varepsilon
\end{equation}

where $\varepsilon$   is a random variable that is asymptotically normally distributed, i.e. $\frac 1{\sigma_n} \varepsilon \xrightarrow{\mathcal D}  \mathcal N(0, 1).$ Convergence occurs with the dimension $n$. For a formal statement and proof, see Section~\ref{asymp_normality_section} in the Appendix.

This means that quantizing the weights is equivalent to adding to the output of each layer (before the activation function) a zero-mean error term that is asymptotically normally distributed. The variance of this error term depends on $s$. 
This connects quantization to work advocating adding noise to intermediary activations of neural networks as a regularizer, e.g.~\cite{gulcehre2016noisy}.

% \subsubsection{Kullback-Leibler divergence minimization}

% \razp{This should not fall victim to streamlining :). I meant mostly introduction of terminology .. this seems a result of the work and should be highlighted.}

% If I have a proof of the equivalence between KL divergence and minimization of $\sigma^2$, this would be a good place to put it. At least the statement, all the proofs are in the appendix.

\section{Quantized Distillation \label{distilled_quant_section}}

The context is the following: given a task, we consider a trained state-of-the-art deep model solving it--the \emph{teacher}, 
and a compressed \emph{student} model. The student is compressed in the sense that 1) it is \emph{shallower} than the teacher; and 2) it is \emph{quantized}, in the sense that its weights are expressed at limited bit width. 
The strategy, as for standard distillation~\citep{DBLP:journals/corr/BaC13, 2015arXiv150302531H} is for the student to leverage the converged teacher model to reach similar accuracy. 
%Dan: this may go better in the intro
We note that distillation has been used previously to obtain compact high-accuracy encodings of ensembles~\citep{2015arXiv150302531H};  however, we believe this is the first time it is used for model compression via quantization. 

Given this setup, there are two questions we need to address. The first is how to transfer knowledge from the teacher to the student. 
For this, the student will use the \emph{distillation loss}, as defined by~\cite{2015arXiv150302531H}, as the weighted average between two objective functions: cross entropy \emph{with soft targets}, controlled by the temperature parameter $T$, and the cross entropy \emph{with the correct labels}. 
We refer the reader to~\cite{2015arXiv150302531H} for the precise definition of distillation loss.

The second question is how to employ distillation loss in the context of a \emph{quantized} neural network. 
An intuitive approach is to rely on projected gradient descent, where a 
gradient step is taken as in full-precision training, and then the new 
parameters are \emph{projected} to the set of valid solutions. Critically,
we accumulate the error at each projection step into the gradient for the next step. One can think of this process as if \emph{collecting evidence for whether each weight needs to move to the next quantization point or not}. Crucially, the error accumulation prevents the algorithm from getting stuck in the current solution if gradients are small, which would occur in a naive projected gradient approach. This is simillar to the approach taken by BinaryConnect technique, with some differences. \cite{DBLP:journals/corr/LiD0SSG17} also examines these dynamics in detail. 
%
%
%For this, we start from the BinaryConnect technique for training networks with binarized weights. 
%Intuitively, BinaryConnect takes gradient steps, just as in full-precision training, which it then \emph{projects} to the set of valid solutions (with weights $\pm 1$). Critically, the method accumulates the error at each projection step into the gradient for the next step. One can think of this process as if \emph{collecting evidence for whether each weight needs to move to the next quantization point or not}. Crucially, the error accumulation prevents the algorithm from getting stuck in the current solution if gradients are small, which would occur in a naive projected gradient approach. \cite{DBLP:journals/corr/LiD0SSG17}  examined these dynamics in detail. 
Compared to BinnaryConnect, we 
use distillation rather than learning from scratch, hence learning more effeciently. We 
also do not restrict ourselves to binary representation, but rather use 
\emph{variable bit-width} quantization frunctions and \emph{bucketing}, as defined in Section~\ref{sec:preliminaries}.

An alternative view of this process, illustrated in Figure~\ref{fig:qd}, is that we perform the SGD step on the \emph{full-precision} model, but computing the gradient on the \emph{quantized model}, expressed with respect to the \emph{distillation loss}. 
With all this in mind, the algorithm we propose is:
\begin{algorithm}
\caption{Quantized Distillation}
\label{distill_quant_algo}
\begin{algorithmic}[1]
\Procedure{Quantized Distillation}{}
\State $\textit{Let $w$ be the network weights}$
\BState \emph{loop}
\State $w^q \gets \text{quant\_function}(w, s)$
\State $\textit{Run forward pass and compute distillation loss } l(w^q)$
\State $\textit{Run backward pass and compute} \frac{\partial l(w^q)}{\partial w^q}$
\State $\textit{Update original weights using SGD \textbf{in full precision }} w = w - \nu \cdot  \frac{\partial l(w^q)}{\partial w^q}$
\BState $\textit{Finally quantize the weights before returning: } w^q \gets \text{quant\_function}(w, s)$
\BState \Return $w^q$
\EndProcedure
\end{algorithmic}
\end{algorithm}

\begin{figure}
        \includegraphics[width=1\textwidth,trim={0cm 9.2cm 0cm 0cm}, clip]{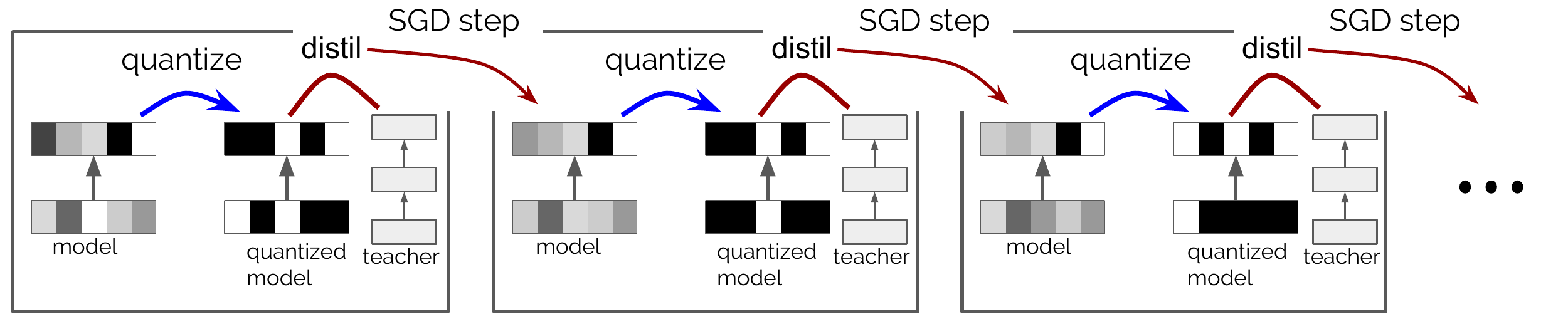}
        \caption{Depiction of the steps of quantized distillation. Note the accumulation over multiple steps of gradients in the unquantized model leads to a switch in quantization (e.g. top layer left most square). }
    \label{fig:qd}
\end{figure}

\section{Differentiable Quantization}

\subsection{General Description}

We introduce differentiable quantization as a general method of improving the accuracy of a quantized neural network, by exploiting non-uniform quantization point placement. 
In particular, we are going to use the non-uniform quantization function defined in Section \ref{non_uniform_quant}. Experimentally, we have found little difference between stochastic and deterministic quantization in this case, and therefore will focus on the simpler deterministic quantization function here. 

Let $p = (p_1, \dots, p_s)$ be the vector of quantization points, and let $Q(v, p)$ be our quantization function, as defined previously. Ideally, we would like to find a set of quantization points $p$ which minimizes the  accuracy loss when quantizing the model using $Q(v, p)$. The key observation is that to find this set $p$, we can just use stochastic gradient descent, because we are able to compute the gradient of $Q$ with respect to $p$.

A major problem in quantizing neural networks is the fact that the decision of which $p_i$ should replace a given weight is discrete, hence the gradient is zero:
$\displaystyle \frac{\partial Q(v, p)}{\partial v} = 0, \quad \text{almost everywhere}.$
This implies that we cannot backpropagate the gradients through the quantization function. To solve this problem, typically a variant of the straight-through estimator is used, see e.g. \cite{DBLP:journals/corr/BengioLC13, DBLP:journals/corr/HubaraCSEB16}.
On the other hand, the model as a function of the chosen $p_i$ is \emph{continuous} and \emph{can be differentiated}; the gradient of $Q(v, p)_i$ with respect to $p_j$ is well defined almost everywhere, and it is simply

\begin{equation}
\label{grad_non_uniform_quant}
\frac{\partial Q(v, p)_i}{\partial p_j} = \begin{cases} \alpha_i, & \text{if $v_i$ has been quantized to $p_j$} \\ 0, & \text{otherwise}, \end{cases}
\end{equation}

where $\alpha_i$ is $i$-th element of the scaling factor, assuming we are using a bucketing scheme. If no bucketing is used, then $\alpha_i = \alpha$ for every $i$. Otherwise it changes depending on which bucket the weight $v_i$ belongs to.

Therefore, we can use the same loss function we used when training the original model, and with Equation \eqref{grad_non_uniform_quant} and the usual backpropagation algorithm we are able to compute its gradient with respect to the quantization points $p$. Then we can minimize the loss function with respect to $p$ with the standard SGD algorithm.
%\antp{Note: upon reading Song Han paper again, this is actually similar to weight sharing. In weight sharing they use a k-mean cluster algorithm to find the clusters, compute for each one the mean and work with this collection of values (centroids). They train the network modifying the values of these centroid, aggrgeating the gradient in essentially the same way as we do here. Basically, their centroids are our quantization points. The difference is that they compute the initial centroid as a mean of a certain group of values, and the assignment of these values to the centroid never changes. We choose initial quantization points using a certain heuristic, and then for every iteration we quantize the points to the closest quantization point. So it's not really the same, but similar. We should probably comment on it.}
The algorithm then becomes the following:

\begin{algorithm}
\caption{Differentiable Quantization}
\label{diff_quant_algo}
\begin{algorithmic}[1]
\Procedure{Differentiable Quantization}{}
\State $\textit{Let $w$ be the networks weights and $p$ the initial quantization points}$
\BState \emph{loop}
\State $w^q \gets \text{quant\_function}(w, p)$
\State $\textit{Run forward pass and compute loss } l(w^q)$
\State $\textit{Run backward pass and compute} \frac{\partial l(w^q)}{\partial w^q}$
\State $\textit{Use equation above to compute}\frac{\partial l(w^q)}{\partial p}$
\State $\textit{Update quantization points using SGD or similar: } p = p - \nu \cdot \frac{\partial l(w^q)}{\partial p}$
\BState \Return p
\EndProcedure
\end{algorithmic}
\end{algorithm}

\paragraph{Note on Efficiency.} Optimizing the points $p$ can be slower than training the original network, since we have to perform the normal forward and backward pass, and in addition we need to quantize the weights of the model and perform the backward pass to get to the gradients w.r.t. $p$. However, in our experience differential quantization  requires an order of magnitude less iterations to converge to a good solution, and can be implemented efficiently. 

\paragraph{Weight Sharing.} Upon close inspection, this method can be related to weight sharing~ \cite{DBLP:journals/corr/HanMD15}. 
Weight sharing uses a $k$-mean clustering algorithm to find good clusters for the weights, adopting the centroids as quantization points for a cluster. 
The network is trained modifying the values of the centroids, aggregating the gradient in a similar fashion. The difference is in the initial assignment of points to centroids, but also, more importantly, in the fact that the assignment of weights to centroids never changes. By contrast, at every iteration we re-assign weights to the closest quantization point, and use a different initialization.

%\razp{Is the additional backward step needed to get $p$ significant? It should be really cheap. I don't see the point of having this paragraph as it makes it sound expensive. Is it not simply saying $w=sc(p[floor(normalized(w)*k])$. I.e. is just indexing (advanced indexing) but it should be cheap.}
%\antp{Unfortunately the forward and backward pass of the quantization function can be very slow, as we have to determine the closest quantization point for every parameter in the model. The uniform quantization (which corresponds to your formula) is indeed very fast. I spent 2-3 days optimizing the forward and backward pass of the non uniform quantization algorithm to make it at least usable. Should we comment on the algorithm used to make it faster or just refer to the code on github? I didn't comment on it since I thought it is an implementation detail, and we can't really say that what we use is optimal}

\subsection{Discussion and Additional Heuristics \label{heuristics_diff_quant}}

 While the loss is continuous w.r.t. $p$, there are indirect effects when changing the way each weight gets quantized. This can have drastic effect on the learning process. As an extreme example, we could have degeneracies, where all weights get represented by the same quantization point, making learning impossible. Or diversity of $p_i$ gets reduced, resulting in very few weights being represented at a really high precision while the rest are forced to be represented in a much lower resolution. 

To avoid such issues, we rely on the following set of heuristics. Future work will look at adding a reinforcement learning loss for how the $p_i$ are assigned to weights. 

\paragraph{Choose good starting points.} One way to initialize the starting quantization points is to make them uniformly spaced, which would correspond to use as a starting point the uniform quantization function. The differentiable quantization algorithm needs to be able to use a quantization point in order to update it; therefore, to make sure every quantization point is used we initialize the points to be the quantiles of the weight values. This ensures that every quantization point is associated with the same number of values and we are able to update it. 

\paragraph{Redistribute bits where it matters.} Not all layers in the network need the same accuracy. A measure of how important each weight is to the final prediction is the norm of the gradient of each weight vector. So in an initial phase we run the forward and backward pass a certain number of times to estimate the gradient of the weight vectors in each layer, we compute the average gradient across multiple minibatches and compute the norm; we then allocate the number of points associated with each weight according to a simple linear proportion.  In short we estimate 
\begin{equation}
\left\|\mathbb{E}\left[\frac{\partial l}{\partial v}\right]\right\|_2
\end{equation}
where $l$ is the loss function,$v$ is the vector of weights in a particular layer and $\left(\frac{\partial l}{\partial v}\right)_i = \frac{\partial l}{\partial v_i}$ and we use this value to determine which layers are most sensitive to quantization. % This quantity is equivalent to the summed elements of a diagonal Fisher approximation for the particular weight vector which is a measure of sensitivity of the model outputs to the weight \citep{Amari:1998:NGW:287476.287477}.

%\antp{Wouldn't the fisher be the expectation of the hessian? Here we are taking the expectation of the norm of the gradient} \razp{By norm of a weight (scalar) do you mean the gradient squared? I'm just wondering because expected gradient squared, which maybe is not what you used, corresponds to the Fisher, which can be thought as a measure of how much this particular weight is needed for the loss. Hence maybe easier to introduce/talk about? Prunning is done via this, and this might give you the connection with prunning that you like .. i.e. we can say we can use some kind of prunning of the gradient ! }. 

When using this process, we will use more than the indicated number of bits in some layers, and less in others. We can reduce the impact of this effect with the use of Huffman encoding, see Section \ref{compression_section}; in any case, note that while the total number of points stays constant, allocating more points to a layer  will increase bit complexity overall if the layer has a larger proportion of the weights.

\paragraph{Use the distillation loss.} In the algorithm delineated above, the loss refers to the loss we used to train the original model with. Another possible specification is to treat the unquantized model as the teacher model, the quantized model as the student, and to use as loss the distillation loss between the outputs of the unquantized and quantized model. In this case, then, we are optimizing our quantized model not to perform best with respect to the original loss, but to mimic the results of the unquantized model, which should be easier to learn for the model and provide better results. 

\paragraph{Hyperparameter optimization.} The algorithm above is an optimization problem very similar to the original one. As usual, to obtain the best results one should experiment with hyperparameters optimization, and different variants of gradient descent. 

\section{Compression \label{compression_section}}

We now analyze the space savings when using $b$ bits and bucket size of $k$. Let $f$ be the size of full precision weights (32 bit) and let $N$ be the size of the ``vector" we are quantizing. Full precision requires $fN$ bits, while the quantized vector requires $bN + \frac {2fN}k$. (We use $b$ bits per weight, plus the scaling factors $\alpha$ and $\beta$ for every bucket). The size gain is therefore $
\label{equationSizeGain}
g(b, k; f) = \frac{kf}{kb + 2f}.
$

For differentiable quantization, we also have to store the values of the quantization points. Since this number does not depend on $N$, the amount of space required is negligible and we ignore it for simplicity.
For example, at $256$ bucket size, using 2 bits per component yields $14.2\times$ space savings w.r.t. full precision, while 4 bits yields $7.52\times$ space savings. 
At $512$ bucket size, the 2 bit savings are $15.05\times$, while 4 bits yields $7.75\times$ compression. 

% Table \ref{SizeGainTable} is a summary of the size gain for typical values of $b$ and $k$ (assuming $f = 32$ bit):

% \begin{table}[t]
% \caption{Size gain table for different values of $b$, $k$ for $f=32$ bit}
% \label{SizeGainTable}
% \begin{center}
% \begin{tabular}{c | c c}
%   & \multicolumn{2}{c}{Bucket size} \\
%  Number of bits & 256 & 512 \\
%  \hline \\
%  2 & 14.2 & 15.05 \\
%  3.75 & 8 & 8.25 \\
%  4 & 7.52 & 7.75 \\
 
% \end{tabular}
% \end{center}
% \end{table}

\paragraph{Huffman encoding.} To save additional space, we can use Huffman encoding to represent the quantized values. In fact, each quantized value can be thought as the pointer to a full precision value; in the case of non uniform quantization is $p_k$, in the case of uniform quantization is $k/s$. We can then compute the frequency for every index across all the weights of the model and compute the optimal Huffman encoding. The mean bit length of the optimal encoding is the amount of bits we actually use to encode the values. This explains the presence of fractional bits in some of our size gain tables from the Appendix. 

We emphasize that we only use these compression numbers as a ballpark figure, since additional implementation costs might mean that these savings are not always easy to translate to practice~\cite{DBLP:journals/corr/HanMD15}. 

\section{Experimental Results}

\subsection{Small datasets}
\iffalse
\paragraph{Datasets and Tasks.} We test the methods presented above on several datasets using different models. For image classification, we consider CIFAR-10 and CIFAR-100~\citep{Krizhevsky09learningmultiple}. For neural machine translation (NMT), we consider the OpenNMT integration test dataset, a German-English translation task consisting of 200K sentences~\cite{2017opennmt}. 

\antp{We have to clarify and write clearly why for some experiments we don't quantize first and last layer and for some we do!}
\fi

\paragraph{Methods.} We will begin with a set of experiments on smaller datasets, which allow us to more carefully cover the parameter space. We compare the performance of the methods described in the following way: we consider as baseline the \emph{teacher} model, the \emph{distilled} model and a \emph{smaller} model: the distilled and smaller models have the same architecture, but the distilled model is trained using distillation loss on the teacher, while the smaller model is trained directly on targets. Further, we compare the performance of Quantized Distillation and Differentiable Quantization. 
In addition, we will also use PM (``post-mortem") quantization, which uniformly quantizes the weights after training without any additional operation, with and without bucketing. 
All the results are obtained with a bucket size of 256, which we found to empirically provide a good compression-accuracy trade-off.
We refer the reader to  Appendix~\ref{app:experiments} for details of the datasets and models. 

\paragraph{CIFAR-10 Experiments.} For image classification on CIFAR-10, we tested the impact of different training techniques on the accuracy of the distilled model, while varying the parameters of a CNN architecture, such as quantization levels and model size. Table~\ref{table:cifar-10-results} contains the results for full-precision training, PM quantization with and without bucketing, as well as our methods. The percentages on the left below the student models definition are the accuracy of the normal and the distilled model respectively (trained with full precision). More details are reported in table \ref{cifar10_size_table} in the appendix. We also tried an additional model where the student is deeper than the teacher, where we obtained that the student quantized to 4 bits is able to achieve significantly better accuracy than the teacher, with a compression factor of more than $7\times$.  

\begin{table}[!htb]
\caption{CIFAR10 accuracy. Teacher model: 5.3M param, 21 MB, accuracy 89.71 \%.Details about the resulting size of the models are reported in table \ref{cifar10_size_table} in the appendix.}
\label{table:cifar-10-results}
\begin{center}
\begin{tabular}{c | c | c c c}
 & & 2 bits & 4 bits & 8 bits \\
 \multirow{4}{*}{\begin{tabular}{@{}c@{}}Student model 1 \\1M param - 4 MB \\ 84.5\% - 88.8\% \end{tabular}} & 
 PM Quant.(No bucket) & 9.30 \% & 67.99 \% &  88.91 \% \\
 & PM Quant. (with bucket) & 10.53 \% & 87.18 \% & 88.80 \% \\
 & Quantized Distill. & 82.4 \% & 88.00 \% & 88.82 \% \\
&  Differentiable Quant. & 80.43\% & 88.31 \% &  ------ \\
 \hline
  \multirow{4}{*}{\begin{tabular}{@{}c@{}}Student model 2 \\0.3M param - 1.27 MB \\ 80.3\% - 84.3\% \end{tabular}} &  PM Quant. (No bucket) & 10.15 \% & 68.05 \% & 84.38 \% \\
 & PM Quant. (with bucket) & 11.89 \% & 81.96 \% & 84.38 \% \\
&  Quantized Distill. & 74.22 \% & 83.92 \% & 84.22 \% \\
 & Differentiable Quant. & 72.79 \% & 83.49 \% & ------ \\
 \hline
   \multirow{4}{*}{\begin{tabular}{@{}c@{}}Student model 3 \\0.1M param - 0.45 MB \\ 71.6\% - 78.2\% \end{tabular}}  & PM Quant. (No bucket) & 10.15 \% & 61.30 \% & 78.04 \% \\
 & PM Quant. (with bucket) & 10.38 \% & 72.44 \% & 78.10 \% \\
&  Quantized Distill. & 67.02 \% & 77.75 \% & 77.92 \% \\
&  Differentiable Quant. & 57.84 \% & 77.36 \% & ------
\end{tabular}
\end{center}
\end{table}

%\antp{Experiments with deeper student - following reviewer comments}

\begin{table}[!htb]
\caption{CIFAR10 accuracy. Teacher model: 5.3M param, 21 MB, accuracy 89.71\%. Details about the resulting size of the models are reported in table \ref{cifar10_size_table_deeper_student} in the appendix.}
\label{table:cifar-10-results-deeper}
\begin{center}
\begin{tabular}{c | c | c c}
 & & 2 bits & 4 bits \\
 \multirow{2}{*}{\begin{tabular}{@{}c@{}}Deeper student \\ 5.8M param - 23.2 MB \\ 93.22\% - 92.6\% \end{tabular}} & 
 PM Quant.(No bucket) & 12.60 \% & 91.11 \%  \\
 & PM (with bucketing) & 45.82 \% & 92.30 \% \\
 & Quantized Distilled & 89.33 \% & 92.17 \% \\
\end{tabular}
\end{center}
\end{table}

\iffalse
\antp{The resulting model size is about 7x smaller than the teacher but results are even better with 4 bits, and the same with 2 bits.}

\antp{Note that the normal model is better than the distilled. This is because the teacher is actually bad, only at 89\% and worse than the student. So I am not sure how much sense this makes; we can improve the accuracy but I guess we would be better off by simply using as teacher the deeper model. What do you think?}
\fi

We performed additional experiments for differentiable quantization using a wide residual network ~\citep{2016arXiv160507146Z} that gets to higher accuracies; see table \ref{table:cifar-10-results-wide-res-net}.

\begin{table}[!htb]
\caption{CIFAR10 accuracy (Wide Residual Network). Teacher model: 145M param, 580 MB, accuracy 95.7 \%. Details about the resulting size of the models are reported in table \ref{cifar10_size_table_wideresnet} in the appendix.}
\label{table:cifar-10-results-wide-res-net}
\begin{center}
\begin{tabular}{c | c | c c}
 Student model & & 2 bits & 4 bits \\
\multirow{2}{*}{\begin{tabular}{c} 82.7M param - 330 MB \\ 95.3\% - 94.19\% \end{tabular}} & PM Quant. (with bucket) & 15.35 \% & 81.1 \% \\
&  Quantized Distill. & 94.23 \% & 94.73 \%  \\
\end{tabular}
\end{center}
\end{table}

Overall, quantized distillation appears to be the method with best accuracy across the whole range of bit widths and architectures. It outperforms PM significantly for 2bit and 4bit quantization, achieves accuracy within $0.2\%$ of the \emph{teacher} at 8 bits on the larger student model, and relatively minor accuracy loss at 4bit quantization. Differentiable quantization is a close second on all experiments, but it has much faster convergence. 
Further, we highlight the good accuracy of the much simpler PM quantization method with bucketing at higher bit width (4 and 8 bits). 

\paragraph{CIFAR-100 Experiments.} Next, we perform image classification with the full 100 classes. Here, we focus on 2bit and 4bit quantization, and on a single student architecture. The baseline architecture is a wide residual network with 28 layers, and 36.5M parameters, which is state-of-the-art for its depth on this dataset. 
The student has depth and width reduced by $20\%$, and half the parameters. It is chosen so that reaches the same accuracy as the teacher model when distilled at full precision. Accuracy results are given in Table~\ref{table:cifar-100-results}. More details are reported in Table~\ref{cifar100_size_table}, in the appendix. 

\begin{table}[!htb]
\caption{CIFAR100 accuracy and model size. Teacher: 36.5M param, 146 MB, acc. 77.21 \%.}
\label{table:cifar-100-results}
\begin{center}
\begin{tabular}{c | c | c c}
 & & 2 bits & 4 bits \\
   \multirow{4}{*}{\begin{tabular}{@{}c@{}}Student model \\ 17.2M param - 68.8 MB \\ 77.08\% - 77.24\% \end{tabular}}  & PM Quant. (No bucket) & 1.38 \% - 3.18 MB & 1.29 \% - 5.77 MB  \\
 & PM Quant. (with bucket) & 1.00 \% - 3.9 MB & 73.5 \% - 8.2 MB \\
&  Quantized Distill. & 27.84 \% - 4.3 MB & 76.31 \% - 8.2 MB \\
&  Differentiable Quant. & 49.32 \% - 7.9 MB & 77.07 \%  - 12.4 MB
\end{tabular}
\end{center}
\end{table}

The results confirm the trend from the previous dataset, with distilled and differential quantization preserving accuracy within less than $1\%$ at 4bit precision. However, we note that accuracy loss is catastrophic at 2bit precision, probably because of reduced model capacity. 
We note that differentiable quantization is able to best recover accuracy for this harder task. 

\paragraph{OpenNMT Experiments.} The OpenNMT integration test dataset~\citep{OpenNMTTest} consists of 200K train sentences and 10K test sentences for a German-English translation task.  
To train and test models we use the OpenNMT PyTorch codebase~\citep{2017opennmt}.
 We modified the code, in particular by adding the quantization algorithms and the distillation loss. As measure of fit we will use perplexity and the BLEU score, the latter computed using the \verb|multi-bleu.perl| code from the moses project~\citep{moses}.

Our target models consist of an embedding layer, an encoder consisting of $n$ layers of LSTM, a decoder consisting of $n$ layers of LSTM, and a linear layer. The decoder also uses the global attention  mechanism described in \cite{DBLP:journals/corr/LuongPM15}. 
% For translation tasks, we investigate using an additional type of distillation loss, described in \cite{DBLP:journals/corr/KimR16a}. They argue that we can make the student model from the teacher learn on a sequence-level, not just on a word-level basis (as it would be the case with the normal distillation loss between the outputs of the teacher and outputs of the student). Sequence level training can then be approximated by training the student with the usual training loss, but using as targets the translations of the teacher model. See the paper for more details. \antp{Actually, in the experiments we decided to use word level distillation instead, so I guess we can delete this paragraph}
For the teacher network we set $n=2$, for a total of 4 LSTM layers with LSTM size $500$. For the student networks we choose $n=1$, for a total of 2 LSTM layers. We vary the LSTM size of the student networks and for each one, we compute the distilled model and the quantized versions for varying bit width. Results are summarized in Table~\ref{table:opennmt_results}. The BLEU scores below the student model refer to the BLEU scores of the normal and distilled model respectively (trained with full precision). Details about the resulting size of the models are reported in table \ref{openNMT_integ_size_table} in the appendix.

\begin{table}[!htb]
\caption{OpenNMT dataset BLEU score and perplexity (ppl).  Teacher model: 84.8M param, 340 MB, 26.1 ppl, 15.88 BLEU.  Details about the resulting size of the models are reported in table \ref{openNMT_integ_size_table} in the appendix.}
\label{table:opennmt_results}
\begin{center}
\begin{tabular}{c | c | c c}
 & & 2 bits & 4 bits \\
 \multirow{4}{*}{\begin{tabular}{@{}c@{}}Student model 1 \\81.6M param - 326 MB \\ 14.97 - 16.13 BLEU \end{tabular}} & 
 PM Quant.(No bucket) & 0.00 - $2\cdot 10^{17}$ ppl  & 0.24 - $2 \cdot 10^6$ ppl \\
 & PM Quant. (with bucket) & 4.12 - 125.1 ppl & 16.29 - 26.2 ppl  \\
 & Quantized Distill. & 0.00 - 6645 ppl & 15.73 - 25.43 ppl \\
&  Differentiable Quant. & 0.7 - 249 ppl & 15.01 - 28.8 ppl \\
 \hline
  \multirow{4}{*}{\begin{tabular}{@{}c@{}}Student model 2 \\64.8M param - 249 MB \\ 14.22 - 15.48 BLEU \end{tabular}} &  PM Quant. (No bucket) & 0.00 - $5\cdot 10^8$ ppl  & 6.65 - 71.78 ppl \\
 & PM Quant. (with bucket) & 1.72 - 286.98 ppl & 15.19 - 28.95 ppl \\
&  Quantized Distill. & 0.00 - 4035 ppl & 15.26 - 29.1 ppl \\
 & Differentiable Quant. & 0.28 - 306 ppl  & 13.86 - 31.33 ppl \\
 \hline
   \multirow{4}{*}{\begin{tabular}{@{}c@{}}Student model 3 \\57.2M param - 228 MB \\ 12.45 - 13.8 BLEU \end{tabular}}  & PM Quant. (No bucket) & 0.00 - $3\cdot 10^8$ ppl & 5.47 - 106.5 ppl  \\
 & PM Quant. (with bucket) & 0.24 - 1984 ppl & 12.64 - 36.56 ppl \\
&  Quantized Distill. & 0.14 - 731 ppl & 12 - 37 ppl \\
&  Differentiable Quant. & 0.26 - 306 ppl & 12.06  - 38.44 ppl
\end{tabular}
\end{center}
\end{table}

A reasonable intuition would be that recurrent neural networks should be harder to quantize than convolutional neural networks, as quantization errors do not \emph{average out} when executing repeatedly through the same cell, but accumulate. Results contradict this intuition. 
In particular, medium and large-sized students are able to essentially recover  the same scores as the teacher model on this dataset. Perhaps surprisingly, bucketing PM and quantized distillation perform equally well for 4bit quantization.
As expected, cell size is an important indicator for accuracy, although halving both cell size and the number of layers can be done without significant loss. 

\subsection{Larger datasets}
\paragraph{WMT13 Experiments.}
We run a similar LSTM architecture as above for the WMT13 dataset~\citep{koehn2005epc} (1.7M sentences train, 190K sentences test) and we provide additional experiments for quantized distillation technique, see Table~\ref{table:wmt13_results}. We note that, on this large dataset, PM quantization does not perform well, even with bucketing. 
On the other hand, quantized distillation with 4bits of precision has \emph{higher} BLEU score than the teacher, and similar perplexity. 

\begin{table}[!htb]
\caption{WMT13 dataset BLEU score and perplexity (ppl).  Teacher model: 84.8M param, 340 MB, 5.8 ppl, 34.7 BLEU.  Details about model size are reported in Table \ref{WMT13_integ_size_table}.}
\label{table:wmt13_results}
\begin{center}
\begin{tabular}{c | c | c}
  & & 4 bits \\
 \multirow{2}{*}{\begin{tabular}{@{}c@{}} Student Model \\ 81.6M param - 326 MB \\ 30.22 - 30.21 BLEU \end{tabular}}
 & PM Quant. (No bucket) & 21.38 BLEU - 12.61 ppl \\
 & PM Quant. (with bucket) &  27.73 BLEU - 7.4 ppl \\
 & Quantized Distill.  & 35.32 BLEU - 6.48 ppl \\
\end{tabular}
\end{center}
\end{table}

\paragraph{The ImageNet Dataset.}
We also experiment with ImageNet using the ResNet architecture~\cite{he2016deep}. In the first experiment, we use a ResNet34 teacher, and a student ResNet18 student model. 
Experiments quantizing the standard version of this student resulted in an accuracy loss of around $4\%$, and hence we experiment with a wider model, which doubles the number of filters for each convolutional layer. We call this 2xResNet18. This is in line with previous work on wide ResNet architectures ~\citep{2016arXiv160507146Z}, wide students for distillation~\citep{DBLP:journals/corr/BaC13}, and wider quantized networks~\citep{DBLP:journals/corr/abs-1709-01134}. We also note that, in line with previous work on this dataset~\citep{DBLP:journals/corr/ZhuHMD16, DBLP:journals/corr/abs-1709-01134}, we do not quantize the first and last layers of the models, as this can hurt accuracy. 

After 62 epochs of training, the quantized distilled 2xResNet18 with 4 bits reaches a \emph{validation} accuracy of 73.31\%. Surprisingly, this is higher than the \emph{unquantized} ResNet18 model (69.75\%), and has virtually the same accuracy as the ResNet34 teacher. In terms of size, this model is more than $2\times$ smaller than ResNet18 (but has higher accuracy), and is $4\times$ smaller than ResNet34, and about $1.5\times$ faster on inference, as it has fewer layers. This is state-of-the-art for 4bit models with 18 layers; to our knowledge, no such model has been able to surpass the accuracy of ResNet18.  

We re-iterated this experiment using a 4-bit quantized 2xResNet34 student transferring from a ResNet50 full-precision teacher. We obtain a 4-bit quantized student of almost the same accuracy, which is $50\%$ shallower and has a $2.5\times$ smaller size. 

\begin{table}[!htb]
\caption{Imagenet accuracy and model size. Bucket size = 256.}
\label{imagenet_results}
\begin{center}
\begin{tabular}{c | c c c c}
Model name & Top-1 Accuracy & Top-5 Accuracy&  \# of parameters & Size (MB)  \\ \hline
Teacher model: ResNet34 & 73.31 \%  & 91.42 \% & 21.79 millions & 87.16 MB \\
ResNet18 normal & 69.75 \% & 89.07 \% &  11.69 millions & 46.76 MB \\
2xResNet18 QD 4 bits & 73.10 \% & 91.17 \% &  45.69 millions &  21.98 MB \\
\hline
Teacher model: ResNet50 & 76.13 \%  & 92.86 \% & 25.55 millions & 102.2 MB \\
2xResNet34 QD 4 bits & 76.07 \% & 92.71 \% &  86.11 millions &  41.53 MB \\

\end{tabular}
\end{center}
\end{table}

\subsection{Additional Experiments}

\paragraph{Distillation Loss versus Normal Loss.} 
One key question we are interested in is whether distillation loss is a consistently better metric when quantizing, compared to standard loss. We tested this for CIFAR-10, comparing the performance of quantized training with respect to each loss. 
At 2bit precision, the student converges to $67.22\%$ accuracy with normal loss, and to $82.40\%$ with distillation loss. At 4bit precision, the student converges to $86.01\%$  accuracy with normal loss, and to $88.00\%$ with distillation loss. On OpenNMT, we observe a similar gap: the 4bit quantized student converges to $32.67$ perplexity and $15.03$ BLEU when trained with normal loss, and to $25.43$ perplexity (better than the teacher) and $15.73$ BLEU when trained with distillation loss. 
This strongly suggests that \emph{distillation loss is superior} when quantizing. For details, see Section \ref{distill_vs_normal_loss} in the Appendix.

\paragraph{Impact of Heuristics on Differentiable Quantization.} We also performed an in-depth study of how the various heuristics impact accuracy. We found that, for differentiable quantization,  redistributing bits according to the gradient norm of the layers is absolutely essential for good accuracy; quantiles and distillation loss also seem to provide an improvement, albeit smaller. Due to space constraints, we defer the results and their discussion to Section~\ref{appendix:heuristics_experiments} of the Appendix. 

\paragraph{Inference Speed.} 
In general, shallower students lead to an almost-linear decrease in inference cost, w.r.t. the depth reduction. 
For instance, in the CIFAR-10 experiments with the wide ResNet models, the teacher forward pass takes $67.4$ seconds, while the student takes 43.7 seconds; roughly a 1.5x speedup, for 1.75x reduction in depth. On the ImageNet test set using 4 GPUs (data-parallel), a forward pass takes 263 seconds for ResNet34, 169 seconds for ResNet18, and 169 seconds for our 2xResNet18. (So, while having more parameters than ResNet18, it has the same speed because it has the same number of layers, and is not wide enough to saturate the GPU. We note that we did not exploit 4bit weights, due to the lack of hardware support.) Inference on our model is 1.5 times faster, while being 1.8 times shallower, so here the speedup is again almost linear.

\vspace{-1em}
\section{Discussion}

We have examined the impact of combining distillation and quantization when compressing deep neural networks. 
Our main finding is that, when quantizing, one can (and should) leverage large, accurate models via distillation loss, if such models are available. We have given two methods to do just that, namely quantized distillation, and differentiable quantization.  
The former acts directly on the training process of the student model, while the latter provides a way of optimizing the quantization of the student so as to best fit the teacher model. 

Our experimental results suggest that these methods can compress existing models by up to an order of magnitude in terms of size, on small image classification and NMT tasks, while preserving accuracy. 
At the same time, we note that distillation also provides an automatic improvement in \emph{inference speed}, since it generates shallower models.
One of our more surprising findings is that naive uniform quantization \emph{with bucketing} appears to perform well in a wide range of scenarios. 
Our analysis in Section~\ref{sec:noise} suggests that this may be because bucketing provides a way to parametrize the Gaussian-like noise induced by quantization. 
Given its simplicity, it could be used consistently as a baseline method. 

In our experimental results, we performed manual architecture search for the depth and bit width of the student model, which is time-consuming and error-prone. In future work, we plan to examine the potential of reinforcement learning or evolution strategies to discover the structure of the student for best performance given a set of space and latency constraints.  The second, and more immediate direction, is to 
examine the practical speedup potential of these methods, and use them together and in conjunction with existing compression methods such as weight sharing~\cite{DBLP:journals/corr/HanMD15} and with existing low-precision computation frameworks, such as NVIDIA TensorRT, or FPGA platforms. 

\subsubsection*{Acknowledgements}

We would like to thank Ce Zhang  (ETH Z\'urich), Hantian Zhang (ETH Z\'urich) and Martin Jaggi (EPFL) for their support with experiments and valuable feedback.

\appendix

\bibliography{iclr2018_conference}
\bibliographystyle{iclr2018_conference}

\section{Full experimental results}
\label{app:experiments}

\subsection{CIFAR10 \label{appendix:cifar10_results}}

The model used to train CIFAR10 is the one described in \cite{2016arXiv160305691U} with some minor modifications. We use standard data augmentation techniques, including random cropping and random flipping. The learning rate schedule follows the one detailed in the paper. The structure of the models we experiment with consists of some convolutional layers, mixed with dropout layers and max pooling layers, followed by one or more linear layers.

The model used are defined in Table \ref{table:cifar10_model_specification}. The $c$ indicates a convolutional layer, mp a max pooling layer, dp a dropout layer, fc a linear (fully connected) layer. The exponent indicates how many consecutive layers of the same type are there, while the number in front of the letter determines the size of the layer. In the case of convolutional layers is the number of filters. All convolutional layers of the teacher are 3x3, while the convolutional layers in the smaller models are 5x5.

\begin{table}[!htb]
\caption{CIFAR10: model specifications}
\label{table:cifar10_model_specification}
\begin{center}
\begin{tabular}{c | c }
Teacher model &  $76c^2$-mp-dp-$126c^2$-mp-dp-$148c^4$-mp-dp-1200fc-dp-1200fc \\
Smaller model 1 & $75c$-mp-dp-$50c^2$-mp-dp-$25c$-mp-dp-500fc-dp \\
Smaller model 2 & $50c$-mp-dp-$25c^2$-mp-dp-$10c$-mp-dp-400fc-dp \\
Smaller model 3 & $25c$-mp-dp-$10c^2$-mp-dp-$5c$-mp-dp-300fc-dp
\end{tabular}
\end{center}
\end{table}

Following the authors of the paper, we don't use dropout layers when training the models using distillation loss. Distillation loss is computed with a temperature of $T=5$.

Table \ref{cifar10_teacher_table} reports the accuracy of the models trained (in full precision) and their size. Table \ref{cifar10_quantization_table} reports the accuracy achieved with each method, and table \ref{cifar10_size_table} reports the optimal mean bit length using Huffman encoding and resulting model size.

\begin{table}[!htb]
\caption{CIFAR10: Teacher and distilled model accuracy, full precision}
\label{cifar10_teacher_table}
\begin{center}
\begin{tabular}{c | c c c}
Model name & Test accuracy &  \# of parameters & Size (MB) \\
Teacher model & 89.7 \% & 5.3 millions & 21.3 MB \\
\\
Smaller model 1 & 84.5 \% & 1.0 millions & 4.00 MB \\
Distilled model 1 & 88.8 \% & 1.0 millions & 4.00 MB \\
\\
Smaller model 2 & 80.3 \% & 0.3 millions & 1.27 MB\\
Distilled model 2 & 84.3 \% & 0.3 millions & 1.27 MB \\
\\
Smaller model 3 & 71.6 \% & 0.1 millions & 0.45 MB \\
Distilled model 3 & 78.2 \% & 0.1 millions & 0.45 MB
\end{tabular}
\end{center}
\end{table}

\begin{table}[!htb]
\caption{CIFAR10: Test accuracy for quantized models. Results computed with bucket size = 256}
\label{cifar10_quantization_table}
\begin{center}
\begin{tabular}{c | c c c}
 & 2 bits & 4 bits & 8 bits \\
 PM Quant. 1 (No bucket) & 9.30 \% & 67.99 \% & 88.91 \% \\
 PM Quant. 1 (with bucket) & 10.53 \% & 87.18 \% & 88.80 \% \\
 Quantized Distill. 1 & 82.4 \% & 88.00 \% & 88.82 \% \\
 Differentiable Quant. 1 & 80.43\% & 88.31 \% &  --- \\
 \\
  PM Quant. 2 (No bucket) & 10.15 \%  & 68.05 \% & 84.38 \% \\
 PM Quant. 2 (with bucket) & 11.89 \% & 81.96 \% & 84.38 \% \\
 Quantized Distill. 2 & 74.22 \% & 83.92 \% & 84.22 \% \\
 Differentiable Quant. 2 & 72.79 \% & 83.49 \% & --- \\
 \\
  PM Quant. 3 (No bucket) & 10.15 \% & 61.30 \% & 78.04 \% \\
 PM Quant. 3 (with bucket) & 10.38 \% & 72.44 \% & 78.10 \% \\
 Quantized Distill. 3 & 67.02 \% & 77.75 \% & 77.92 \% \\
 Differentiable Quant. 3 & 57.84 \% & 77.36 \% & ---
\end{tabular}
\end{center}
\end{table}

\begin{table}[!htb]
\caption{CIFAR10: Optimal length Huffman encoding and resulting model size. Bucket size = 256}
\label{cifar10_size_table}
\begin{center}
\begin{tabular}{c | c c c}
 & 2 bits & 4 bits & 8 bits \\
 PM Quant. 1 (No bucket) & 1.34 bits - 0.17 MB & 2.43 bits - 0.3 MB & 6.48 bits - 0.81 MB  \\
 PM Quant. 1 (with bucket) & 1.58 bits - 0.22 MB & 3.52 - 0.47 MB & 7.58 bits - 0.98 MB  \\
 Quantized Distill. 1 & 1.7 bits - 0.24 MB & 3.64 bits - 0.48 MB & 7.70 bits - 1 MB \\
 Differentiable Quant. 1 & 3.18 bits - 0.43 MB & 5.34 bits - 0.7 MB &  ------\\
 \\
  PM Quant. 2 (No bucket) & 1.43 bits - 0.05 MB & 2.6 bits - 0.1 MB & 6.65 bits - 0.26 MB \\
 PM Quant. 2 (with bucket) &  1.6 bits - 0.07 MB & 3.58 bits - 0.15 MB & 7.64 bits - 0.31 MB \\
 Quantized Distill. 2 & 1.7 bits - 0.08 MB & 3.55 bits - 0.15 MB & 7.64 bits - 0.31 MB \\
 Differentiable Quant. 2 & 3.16 bits - 0.13 MB & 5.34 bits - 0.22 MB & ------ \\
 \\
  PM Quant. 3 (No bucket) & 1.46 bits - 0.02 MB & 2.62 bits - 0.03 MB & 6.66 bits - 0.09 MB \\
 PM Quant. 3 (with bucket) & 1.58 bits - 0.026 MB & 3.51 bits - 0.053 MB & 7.56 bits - 0.1 MB \\
 Quantized Distill. 3 & 1.64 bits - 0.027 MB & 3.53 bits - 0.053 MB & 7.59 bits - 0.11 MB \\
 Differentiable Quant. 3 & 3.12 bits - 0.04 MB & 5.41 bits - 0.08 MB & ------
\end{tabular}
\end{center}
\end{table}

We also performed an experiment with a deeper student model. The architecture is  $76c^3$-mp-dp-$126c^3$-mp-dp-$148c^5$-mp-dp-1000fc-dp-1000fc-dp-1000fc (following the same notation  as in table \ref{table:cifar10_model_specification}). We use the same teacher as in the previous experiments. Results are in table \ref{cifar10_quantization_table_deeper}.
\begin{table}[!htb]
\caption{CIFAR10: Teacher and distilled model accuracy, full precision}
\label{cifar10_deeper_student_table}
\begin{center}
\begin{tabular}{c | c c c}
Model name & Test accuracy &  \# of parameters & Size (MB) \\
Teacher model & 89.7 \% & 5.3 millions & 21.3 MB \\
\\
Deeper normal model 1 & 93.22 \% & 5.8 millions & 23.40 MB \\
Deeper distilled model 1 & 92.60 \% & 5.8 millions & 23.40 MB \\
\end{tabular}
\end{center}
\end{table}

\begin{table}[!htb]
\caption{CIFAR10: Test accuracy for quantized deeper student model. Results computed with bucket size = 256}
\label{cifar10_quantization_table_deeper}
\begin{center}
\begin{tabular}{c | c c}
 & 2 bits & 4 bits \\
 PM Quant.  (No bucket) & 12.60 \% & 91.11 \%  \\
 PM Quant.  (with bucket) & 45.82 \% & 92.30 \% \\
 Quantized Distill. & 89.33 \% & 92.17 \% \\
\end{tabular}
\end{center}
\end{table}

\begin{table}[!htb]
\caption{CIFAR10: Optimal length Huffman encoding and resulting model size for deeper student model. Bucket size = 256}
\label{cifar10_size_table_deeper_student}
\begin{center}
\begin{tabular}{c | c c c}
 & 2 bits & 4 bits\\
 PM Quant. (No bucket) & 1.50 bits - 1.13 MB & 3.21 bits - 2.38 MB  \\
 PM Quant. (with bucket) & 1.82 bits - 1.55 MB & 3.92 bits - 3.07 MB  \\
 Quantized Distill. & 1.84 bits - 1.56 MB & 3.92 bits - 3.08 MB  \\
\end{tabular}
\end{center}
\end{table}

\subsubsection{CIFAR10 - WideResNet architecture}

For our second set of experiments on CIFAR10 with the WideResNet architecture, see table \ref{cifar10_teacher_table_wideresnet}. Note that we increase the number of filters but reduce the depth of the model. The implementation of WideResNet used can be found on GitHub \footnote{\url{https://github.com/meliketoy/wide-resnet.pytorch}}. Results of quantized methods are in table \ref{cifar10_quantization_table_wideresnet} while the size of the resulting models is detailed in table \ref{cifar10_size_table_wideresnet}.

\begin{table}[!htb]
\caption{CIFAR10: Teacher and distilled model accuracy, full precision, wide resnet}
\label{cifar10_teacher_table_wideresnet}
\begin{center}
\begin{tabular}{c | c c c c}
Model name & Structure & Test accuracy &  \# of parameters & Size (MB)\\
Teacher model& depth = 28, wide\_factor = 20 & 95.74 \% & 145 millions & 580 MB\\
Smaller model & depth = 22, wide\_factor = 16 & 95.19 \% & 82.7 millions & 330 MB\\ 
Distilled model & depth = 22, wide\_factor = 16 & 94.19 \% & 82.7 millions & 330 MB
\end{tabular}
\end{center}
\end{table}

\begin{table}[!htb]
\caption{CIFAR10: Test accuracy for quantized models. Results computed with bucket size = 256}
\label{cifar10_quantization_table_wideresnet}
\begin{center}
\begin{tabular}{c | c c}
 & 2 bits & 4 bits \\
 PM Quant. (with bucket) & 15.35 \% & 81.09 \%  \\
 Quantized Distill. & 94.23 \% & 94.73 \%  \\
\end{tabular}
\end{center}
\end{table}

\begin{table}[!htb]
\caption{CIFAR10: Optimal length Huffman encoding and resulting model size. Bucket size = 256}
\label{cifar10_size_table_wideresnet}
\begin{center}
\begin{tabular}{c | c c}
 & 2 bits & 4 bits  \\
 PM Quant. (with bucket) & 1.44 bits - 17.56 MB & 2.62 bits - 29.75 MB \\
 Quantized Distill. & 1.54 bits - 17.81 MB & 3.48 bits - 38.65 MB \\
\end{tabular}
\end{center}
\end{table}

\subsection{CIFAR100}

For our CIFAR100 experiments, we use the same implementation of wide residual networks as in our CIFAR10 experiments. The wide factor is a multiplicative factor controlling the amount of filters in each layer; for more details please refer to the original paper \cite{2016arXiv160507146Z}. We train for 200 epochs with an initial learning rate of 0.1.

For the CIFAR100 experiments we focused on one student model. Distillation loss is computed with a temperature of $T=5$.

\begin{table}[!htb]
\caption{CIFAR100: Teacher and distilled model accuracy, full precision}
\label{cifar100_teacher_table}
\begin{center}
\begin{tabular}{c | c c c c}
Model name & Structure & Test accuracy &  \# of parameters & Size (MB)\\
Teacher model& depth = 28, wide\_factor = 10 & 77.21 \% & 36.5 millions & 146 MB\\
Smaller model & depth = 22, wide\_factor = 8 & 77.08 \% & 17.2 millions & 68.8 MB\\ 
Distilled model & depth = 22, wide\_factor = 8 & 77.24 \% & 17.2 millions & 68.8 MB
\end{tabular}
\end{center}
\end{table}

\begin{table}[!htb]
\caption{CIFAR100: Test accuracy for quantized models. Results computed with bucket size = 256}
\label{cifar100_quantization_table}
\begin{center}
\begin{tabular}{c | c c}
 & 2 bits & 4 bits \\
 PM Quant. (No bucket) & 1.38\% & 1.29\% \\
 PM Quant. (with bucket) & 1.00 \% & 73.47\% \\
 Quantized Distill. & 27.84\% & 76.31\% \\
 Differentiable Quant. & 49.32\% & 77.07\%
\end{tabular}
\end{center}
\end{table}

\begin{table}[!htb]
\caption{CIFAR100: Optimal length Huffman encoding and resulting model size. Bucket size = 256}
\label{cifar100_size_table}
\begin{center}
\begin{tabular}{c | c c}
 & 2 bits & 4 bits \\
 PM Quant. (No bucket) & 1.47 bits - 3.18 MB  & 2.68 bits - 5.77 MB\\
 PM Quant. (with bucket) & 1.56 bits - 3.90 MB & 3.55 bits - 8.18 MB \\
 Quantized Distill. & 1.73 bits - 4.27 MB & 3.54 bits - 8.16 MB \\
 Differentiable Quant. & 3.23 bits - 7.84 MB & 5.53 bits - 12.44 MB
\end{tabular}
\end{center}
\end{table}

\subsection{opentNMT integration test dataset \label{appendix:onmt_results}}

As mentioned in the main text, we use the openNMT-py codebase. We slightly modify it to add distillation loss and the quantization methods proposed. We mostly use standard options to train the model; in particular, the learning rate starts at 1 and is halved every epoch starting from the first epoch where perplexity doesn't drop on the test set. We train every model for 15 epochs. Distillation loss is computed with a temperature of $T=1$.

\begin{table}[!htb]
\caption{openNMT integ: Teacher and distilled models perplexity and BLEU, full precision}
\label{openNMT_integ_teacher_table}
\begin{center}
\begin{tabular}{c | c c c c c}
Model name & Structure & Perplexity & BLEU &  \# of parameters & Size (MB) \\
Teacher model& 4 LSTM layer, 500 cell size & 26.21 & 15.88 & 84.8 millions & 339.28 MB \\
\\
Smaller model 1 & 2 LSTM layer, 512 cell size & 33.03 & 14.97 & 81.6 millions & 326.57 MB \\ 
Distilled model 1 & 2 LSTM layer, 512 cell size & 25.55 & 16.13 & 81.6 millions & 326.57 MB  \\
\\
Smaller model 2 & 2 LSTM layer, 256 cell size & 34.5 & 14.22 & 64.8 millions & 249.56 MB \\ 
Distilled model 2 & 2 LSTM layer, 256 cell size & 27.7 & 15.48 & 64.8 millions & 249.56 MB \\
\\
Smaller model 3 & 2 LSTM layer, 128 cell size & 39.5 & 12.45 & 57.2 millions & 228.85 MB \\ 
Distilled model 3 & 2 LSTM layer, 128 cell size & 33.78 & 13.8 & 57.2 millions & 228.85 MB \\
\end{tabular}
\end{center}
\end{table}

\begin{table}[!htb]
\caption{openNMT integ: Test accuracy for quantized models. Results computed with bucket size = 256}
\label{openNMT_integ_quantization_table}
\begin{center}
\begin{tabular}{c | c c}
 & 2 bits & 4 bits \\
 PM Quant. 1 (No bucket) & $2\cdot 10^{17}$ ppl - 0.00 BLEU & $2.7\cdot 10^6$ ppl - 0.24 BLEU \\
 PM Quant. 1 (with bucket) & 125.1 ppl - 4.12 BLEU & 26.21 ppl - 16.29 BLEU \\
 Quantized Distill. 1 & 6645 ppl - 0.00 BLEU & 25.43 ppl - 15.73 BLEU \\
 Differentiable Quant. 1 & 249 ppl - 0.7 BLEU & 28.8 ppl - 15.01 BLEU \\
 \\
  PM Quant. 2 (No bucket) & $5\cdot 10^8$ ppl - 0.00 BLEU & 71.78 ppl - 6.65 BLEU \\
 PM Quant. 2 (with bucket) & 286.98 ppl - 1.72 BLEU & 28.95 ppl - 15.19 BLEU \\
 Quantized Distill. 2 & 4035 ppl - 0.00 BLEU & 29.1 ppl - 15.26 BLEU \\
 Differentiable Quant. 2 & 306 ppl - 0.28 BLEU & 31.33 ppl - 13.86 BLEU \\
 \\
  PM Quant. 3 (No bucket) & $3\cdot 10^8$ ppl - 0.00 BLEU & 106.5 ppl - 5.47 BLEU \\
 PM Quant. 3 (with bucket) & 1984 ppl - 0.24 BLEU & 36.56 ppl - 12.64 BLEU \\
 Quantized Distill. 3 &731 ppl - 0.14 BLEU & 37 ppl - 12 BLEU \\
 Differentiable Quant. 3 &306 ppl - 0.26 BLEU & 38.4 ppl - 12.06 BLEU \\
 
\end{tabular}
\end{center}
\end{table}

\begin{table}[!htb]
\caption{openNMT integ: Optimal length Huffman encoding and resulting model size. Bucket size = 256}
\label{openNMT_integ_size_table}
\begin{center}
\begin{tabular}{c | c c}
 & 2 bits & 4 bits \\
 PM Quant. 1 (No bucket) & 1.36 bits - 13.93 MB & 1.77 bits - 18.10 MB \\
 PM Quant. 1 (with bucket) & 1.65 bits - 19.47 MB  & 3.69 bits - 40.26 MB \\
 Quantized Distill. 1 &1.75 bits - 20.4 MB & 3.66 bits - 39.97 MB \\
 Differentiable Quant. 1 & 1.72 bits - 20.1 MB & 4.38 bits - 47.32 MB \\
 \\
  PM Quant. 2 (No bucket) & 1.34 bits - 10.89 MB & 1.86 bits - 15.09 MB \\
 PM Quant. 2 (with bucket) & 1.65 bits - 15.4 MB & 3.68 bits - 31.91 MB  \\
 Quantized Distill. 2 & 1.85 bits - 17.05 MB & 3.68 bits - 31.91 MB \\
 Differentiable Quant. 2 & 1.93 bits - 17.67 MB & 4.17 bits - 35.83 MB \\
 \\
  PM Quant. 3 (No bucket) & 1.47 bits - 10.54 MB & 2.13 bits - 15.24 MB \\
 PM Quant. 3 (with bucket) & 1.65 bits - 13.6 MB & 3.68 bits - 28.14 MB  \\
 Quantized Distill. 3 & 1.86 bits - 15.13 MB & 3.68 bits - 28.18 MB \\
 Differentiable Quant. 3 & 1.99 bits - 16.04 MB & 4.25 bits - 31.18 MB \\
 
\end{tabular}
\end{center}
\end{table}

\subsection{WMT13 dataset}

For the WMT13 datasets, we run a similar architecture. We ran all models for 15 epochs; the smaller model overfit with 15 epochs, so we ran it for 5 epochs instead.

\begin{table}[!htb]
\caption{WMT13: Teacher and distilled models perplexity and BLEU, full precision}
\label{WMT13_integ_teacher_table}
\begin{center}
\begin{tabular}{c | c c c c c}
Model name & Structure & Perplexity & BLEU &  \# of parameters & Size (MB) \\
Teacher model& 4 LSTM layer, 500 cell size & 5.83 & 34.77 & 84.8 millions & 339.28 MB \\
Smaller model 1 & 2 LSTM layer, 500 size & 7.98 & 30.22 & 80.8 millions & 323.25 MB \\ 
Distilled model 1 & 2 LSTM layer, 550 cell size & 7.18 & 30.21 & 84.3 millions & 337.21 MB  \\
\end{tabular}
\end{center}
\end{table}

\begin{table}[!htb]
\caption{WMT13: Test accuracy for quantized models. Results computed with bucket size = 256}
\label{WMT13_integ_quantization_table}
\begin{center}
\begin{tabular}{c | c}
 & 4 bits \\
 PM Quant. (No bucket) & 12.17 ppl - 22.79 BLEU  \\
 PM Quant. (with bucket) & 7.34 ppl - 26.18 BLEU  \\
 Quantized Distill. & 6.48 ppl - 35.32 BLEU  \\
 
\end{tabular}
\end{center}
\end{table}

\begin{table}[!htb]
\caption{WMT13: Optimal length Huffman encoding and resulting model size. Bucket size = 256}
\label{WMT13_integ_size_table}
\begin{center}
\begin{tabular}{c | c}
& 4 bits \\
 PM Quant. 1 (No bucket) & 1.98 bits - 20.92 MB  \\
 PM Quant. 1 (with bucket) & 3.63 bits - 41.02 MB  \\
 Quantized Distill. 1 & 3.65 bits - 41.16 MB  \\
\end{tabular}
\end{center}
\end{table}

\subsubsection{Distillation versus Standard Loss for Quantization \label{distill_vs_normal_loss}}

In this section we highlight the positive effects of using distillation loss during quantization. We take models with the same architecture and we train them with the same number of bits; one of the models is trained with normal loss, the other with the distillation loss with equal weighting between soft cross entropy and normal cross entropy (that is, it is the quantized distilled model).

Table \ref{cifar10:distill_vs_normal_quantization_table} shows the results on the CIFAR10 dataset; the models we train have the same structure as the Smaller model 1, see Section \ref{appendix:cifar10_results}.

Table \ref{onmt:distill_vs_normal_quantization_table} shows the results on the openNMT integration test dataset; the models trained have the same structure of Smaller model 1, see Section \ref{appendix:onmt_results}.
Notice that distillation loss can significantly improve the accuracy of the quantized models. 

\begin{table}[!htb]
\caption{CIFAR10: Distillation loss vs normal loss when quantizing}
\label{cifar10:distill_vs_normal_quantization_table}
\begin{center}
\begin{tabular}{c | c c}
 & 2 bits & 4 bits  \\
 Normal loss & 67.22 \% & 86.01 \%  \\
 Distillation loss & \textbf{ 82.40 }\% & \textbf{ 88.00 }\% \\

\end{tabular}
\end{center}
\end{table}

\begin{table}[!htb]
\caption{openNMT integ: Distillation loss vs normal loss when quantizing}
\label{onmt:distill_vs_normal_quantization_table}
\begin{center}
\begin{tabular}{c | c c}
 & \multicolumn{2}{c}{4 bits}  \\
 Normal loss & 32.67 ppl & 15.03 BLEU \\
 Distillation loss & \textbf{25.43 ppl} & \textbf{15.73 BLEU} \\
\end{tabular}
\end{center}
\end{table}

These results suggest that quantization works better when combined with distillation, and that we should try to take advantage of this whenever we are quantizing a neural network.

\subsubsection{Different heuristics for differentiable quantization \label{appendix:heuristics_experiments}}

To test the different heuristics presented in Section \ref{heuristics_diff_quant}, we train with differentiable quantization the Smaller model 1 architecture specified in Section \ref{appendix:cifar10_results} on the cifar10 dataset. The same model is trained with different heuristics to provide a sense of how important they are; the experiments is performed with 2 and 4 bits.

Results suggests that when using 4 bits, the method is robust and works regardless. When using 2 bits, redistributing bits according to the gradient norm of the layers is absolutely essential for this method to work ; quantiles starting point also seem to provide an small improvement, while using distillation loss in this case does not seem to be crucial.

\begin{table}[!htb]
\caption{Results with automatically redistributed bits}
\label{heuristics_with_auto_bits_on_table}
\begin{center}
\begin{tabular}{c | c | c c}
 & & Quantiles & Uniform \\
 \multirow{2}{*}{2 bits} & Distillation loss & 82.94 \% & 78.76 \%  \\
&  Normal loss & 83.67 \% & 76.60 \% \\
\hline
 \multirow{2}{*}{4 bits} & Distillation loss & 88.93 \% & 88.50 \%  \\
&  Normal loss & 88.80 \% & 88.74 \% \\

\end{tabular}
\end{center}
\end{table}

\begin{table}[!htb]
\caption{Results without automatically redistributed bits}
\label{heuristics_without_auto_bits_on_table}
\begin{center}
\begin{tabular}{c | c | c c}
 & & Quantiles & Uniform \\
 \multirow{2}{*}{2 bits} & Distillation loss & 19.69 \% & 22.81 \%  \\
&  Normal loss & 25.28 \% & 22.11 \% \\
\hline
 \multirow{2}{*}{4 bits} & Distillation loss & 88.39 \% & 88.67 \%  \\
&  Normal loss & 88.43 \% & 88.44 \% \\

\end{tabular}
\end{center}
\end{table}

\section{Quantization is Equivalent to Asymptotically Normally Distributed Noise \label{math_properties_uniform}}
In this section we will prove some results about the uniform quantization function, including the fact that is asymptotically normally distributed, see subsection \ref{asymp_normality_section} below. Clearly, we refer to the stochastic version, see Section \ref{uniform_quant}. 

\paragraph{Unbiasedness}

We first start proving the unbiasedness of $\hat Q$; 

\begin{equation}
E[\hat Q(\hat v)_i] = \frac{\lfloor \hat v_is\rfloor}{s} + \frac 1s E[\xi_i] = \frac{\lfloor \hat v_is\rfloor}{s} + \frac 1s (s\hat v_i - \lfloor \hat v_i s\rfloor) = \hat v_i
\end{equation}

Then it is immediate that

\begin{equation}
E[Q(v)] = \alpha E\left[\hat Q\left(\frac {v-\beta}\alpha\right) \right]+ \beta = \alpha \frac{v-\beta}{\alpha} + \beta = v
\end{equation}

\paragraph{Bounds on second and third moment}

We will write out bounds on $\hat Q$; the analogous bounds on $Q$ are then straightforward. For convenience, let us call $\hat l_i =  \lfloor \hat v_is\rfloor$

\begin{align}
E[\hat Q(\hat v)_i^2] &=  \frac{\hat l_i^2}{s^2} + \frac 1{s^2} E[\xi_i^2] + 2\frac{\hat l_i}{s^2}E[\xi_i] = \\ &= \frac{\hat l_i^2}{s^2} + \frac 1{s^2} (s\hat v_i - \hat l_i) + 2\frac{\hat l_i}{s^2}(s\hat v_i - \hat l_i) = \\
&= \frac 1{s^2} \left[\hat v_is(1 + 2\hat l_i) - \hat l_i(\hat l_i+1)\right]
\end{align}

And given that $\hat l_i \le \hat v_is \le \hat l_i + 1$, we readily find

\begin{equation}
\frac{\hat l_i^2}{s^2} \le E[\hat Q(\hat v)_i^2] \le \frac{(\hat l_i+1)^2}{s^2}
\end{equation}

For the third moment, we have 

\begin{align}
E[\hat Q(\hat v)_i^3] &=  \frac{\hat l_i^3}{s^3} + \frac 1{s^3} E[\xi_i^3] + 3\frac{\hat l_i}{s^3}E[\xi_i^2] +3\frac{\hat l_i^2}{s^3}E[\xi_i]  = \\ &= \frac{\hat l_i^3}{s^3} + \frac 1{s^3} (s\hat v_i - \hat l_i) + 3\frac{\hat l_i}{s^3}(s\hat v_i - \hat l_i)  + 3\frac{\hat l_i^2}{s^3}(s\hat v_i - \hat l_i)= \\
&= \frac 1{s^3} \left[\hat v_is(3\hat l_i ^2 + 3 \hat l_i + 1) - \hat l_i(2\hat l_i^2 + 3\hat l_i+1)\right]
\end{align}

And as before, the bounds are

\begin{equation}
\frac{\hat l_i^3}{s^3} \le E[\hat Q(\hat v)_i^3] \le \frac{(\hat l_i+1)^3}{s^3}
\end{equation}

\subsection{Asymptotic normality \label{asymp_normality_section}}

Most of neural networks operations are scalar product computation. Therefore, the scalar product of the quantized weights and the inputs is an important quantity:

$$Q(v)^Tx = \sum_{i=1}^n Q(v_i) x_i$$

We already know from section \ref{math_properties_uniform} that the quantization function is unbiased; hence we know that 

\begin{equation}
\sum_{i=1}^n Q(v_i) x_i = \sum_{i=1}^n v_i x_i  + \varepsilon_n
\end{equation}

with $\varepsilon_n$ is a zero-mean random variable. We will show that $\varepsilon_n$ tends in distribution to a normal random variable.
To prove asymptotic normality, we will use a generalized version of the central limit theorem due to Lyapunov:

\begin{theorem}[Lyapunov Central Limit Theorem]
\label{lyapunovCLT}
Let $\{X_1, X_2, \dots\}$ be a sequence of independent random variables, each with finite expected value $\mu_i$ and variance $\sigma_i^2$. Define $s_n^2 = \sum_{i=1}^n \sigma_i^2$. If, for some $\delta > 0$, the Lyapunov condition 

\begin{equation}
\lim_{n\to\infty} \frac{1}{s_{n}^{2+\delta}} \sum_{i=1}^{n} \operatorname{E}\left[|X_{i} - \mu_{i}|^{2+\delta}\right] = 0
\end{equation}

is satisfied, then 

\begin{equation}
\frac{1}{s_n} \sum_{i=1}^{n} \left(X_i - \mu_i\right) \ \xrightarrow{D}\ N(0,1)
\end{equation}

\end{theorem}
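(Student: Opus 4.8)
The plan is as follows. The Lyapunov Central Limit Theorem is a classical result, and in the setting of this paper it would normally just be invoked with a reference to a standard probability text; nevertheless, the cleanest self-contained route is through characteristic functions together with L\'evy's continuity theorem, and this is the approach I would take. (A shorter alternative is to verify that the Lyapunov condition implies the Lindeberg condition and then quote the Lindeberg--Feller theorem as a black box, but that merely relocates the work.)

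First I would center and normalize: set $Y_i = X_i - \mu_i$, so $E[Y_i] = 0$ and $\mathrm{Var}(Y_i) = \sigma_i^2$, and write $S_n = \frac{1}{s_n}\sum_{i=1}^n Y_i$. Since the $Y_i$ are independent, the target reduces to showing that the characteristic function $\phi_{S_n}(t) = \prod_{i=1}^n \phi_{Y_i}(t/s_n)$ converges pointwise to $e^{-t^2/2}$, the characteristic function of $N(0,1)$; L\'evy's continuity theorem then upgrades this to convergence in distribution.

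The heart of the argument is a controlled Taylor expansion of each factor. Using the elementary bound $|e^{ix} - 1 - ix + x^2/2| \le \min(|x|^3/6,\, |x|^2)$ together with the interpolation inequality $\min(|x|^2, |x|^3) \le |x|^{2+\delta}$, valid for $0 \le \delta \le 1$, I would obtain for each $i$
$$\left|\phi_{Y_i}(t/s_n) - \left(1 - \frac{t^2\sigma_i^2}{2 s_n^2}\right)\right| \le |t|^{2+\delta}\, \frac{E\!\left[|Y_i|^{2+\delta}\right]}{s_n^{2+\delta}}.$$
Then, invoking the standard lemma $|\prod_i z_i - \prod_i w_i| \le \sum_i |z_i - w_i|$ for complex numbers of modulus at most one (characteristic functions have modulus $\le 1$, and the factors $1 - t^2\sigma_i^2/(2s_n^2)$ lie in $[0,1]$ once $n$ is large), the total discrepancy between $\phi_{S_n}(t)$ and $\prod_i\bigl(1 - t^2\sigma_i^2/(2s_n^2)\bigr)$ is at most $|t|^{2+\delta}\, s_n^{-(2+\delta)}\sum_i E[|Y_i|^{2+\delta}]$, which tends to $0$ precisely by the Lyapunov condition.

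It remains to show $\prod_{i=1}^n \bigl(1 - t^2\sigma_i^2/(2s_n^2)\bigr) \to e^{-t^2/2}$. Taking logarithms and using $\log(1-u) = -u + O(u^2)$ together with the normalization $\sum_i \sigma_i^2/s_n^2 = 1$, this reduces to showing that the quadratic correction is negligible, for which I need the asymptotic uniform smallness $\max_i \sigma_i^2/s_n^2 \to 0$. This follows from the Lyapunov condition via Lyapunov's moment inequality $\sigma_i^{2+\delta} \le E[|Y_i|^{2+\delta}]$, since then $\max_i (\sigma_i^2/s_n^2)^{(2+\delta)/2} \le \sum_i E[|Y_i|^{2+\delta}]/s_n^{2+\delta} \to 0$. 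I expect the main obstacle to be the two error-accumulation estimates---the uniform remainder control in the Taylor expansion and the passage from a product of near-one factors to the exponential---rather than the use of the Lyapunov condition itself, which enters only at the end to annihilate the accumulated error.
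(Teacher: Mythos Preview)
The paper does not prove this statement at all: Theorem~\ref{lyapunovCLT} is stated as a classical result and immediately used, without proof, as a black box in the proof of Theorem~\ref{main_theorem_asympt}. There is nothing to compare against.

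Your proposal is a correct and standard characteristic-function proof of the Lyapunov CLT, so in fact it goes strictly beyond what the paper provides. Two small remarks if you want the argument airtight: (i) the interpolation $\min(|x|^{2},|x|^{3})\le |x|^{2+\delta}$ is only valid for $0\le\delta\le 1$, so you should note at the outset that one may assume $\delta\le 1$ without loss of generality (if the Lyapunov condition holds for some $\delta>1$ then, by the moment inequality $E[|Y|^{2+\delta'}]^{1/(2+\delta')}\le E[|Y|^{2+\delta}]^{1/(2+\delta)}$, it also holds for $\delta'=1$); (ii) the assertion that each factor $1-t^{2}\sigma_i^{2}/(2s_n^{2})$ lies in $[0,1]$ for large $n$ already uses $\max_i \sigma_i^{2}/s_n^{2}\to 0$, which you only derive at the end, so the presentation should establish that uniform smallness first.
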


We can now state the theorem:
\begin{theorem}
\label{main_theorem_asympt}
Let $v, x$ be two vectors with $n$ elements. Let $Q$ be the uniform quantization function with $s$ levels defined in \ref{uniform_quant} and define $s_n^2 = \sum_{i=1}^n Var[Q(v_i)x_i]$. If the elements of $v, x$ are uniformly bounded by $M$ \footnote{i.e. there exists a constant $M$ such that for all $n$, $|v_i| \le M$, $|x_i|\le M$ for all $i \in \{1, \dots, n\}$} and $\lim_{n \to \infty} s_n = \infty$, then 

\begin{equation}
\sum_{i=1}^n Q(v_i) x_i = \sum_{i=1}^n v_i x_i  + \varepsilon_n
\end{equation}

with $E[\varepsilon_n] = 0$ and 

\begin{equation}
\lim_{n \to \infty} \frac 1{s_n} \varepsilon_n \xrightarrow{D}\ N(0,1)
\end{equation}

\end{theorem}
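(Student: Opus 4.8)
The plan is to apply the Lyapunov Central Limit Theorem (Theorem~\ref{lyapunovCLT}) to the independent summands $X_i = Q(v_i)x_i$. First I would record that these are independent, since the coordinates are quantized independently, and that, by the unbiasedness established earlier in this section, $\mu_i := E[X_i] = E[Q(v_i)]\,x_i = v_i x_i$. Hence $X_i - \mu_i = (Q(v_i) - v_i)x_i$, so that $\varepsilon_n = \sum_{i=1}^n (X_i - \mu_i)$ and $s_n^2 = \sum_{i=1}^n \sigma_i^2$ with $\sigma_i^2 = \mathrm{Var}[X_i]$. With this identification, everything reduces to checking the Lyapunov condition: $E[\varepsilon_n] = 0$ is immediate, and the conclusion $s_n^{-1}\varepsilon_n \xrightarrow{D} N(0,1)$ is exactly the output of Theorem~\ref{lyapunovCLT}.

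I would take $\delta = 1$, so that the task becomes showing $s_n^{-3}\sum_{i=1}^n E[|X_i - \mu_i|^3] \to 0$. The key observation is that the stochastic quantization error is bounded almost surely: after rescaling, the quantization points are equally spaced at distance $\alpha/s$, and $Q(v_i)$ always lands on one of the two adjacent points straddling $v_i$, so $|Q(v_i) - v_i| \le \alpha/s$. Under the uniform bound $|v_i| \le M$ one has $\alpha = \max_i v_i - \min_i v_i \le 2M$, and together with $|x_i| \le M$ this gives the uniform, $n$-independent bound $|X_i - \mu_i| = |Q(v_i) - v_i|\,|x_i| \le 2M^2/s =: C$.

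From here the verification is routine. Because $|X_i - \mu_i| \le C$ almost surely, I can dominate the third absolute moment by the second, $E[|X_i - \mu_i|^3] \le C\, E[(X_i - \mu_i)^2] = C\sigma_i^2$, and summing yields $\sum_{i=1}^n E[|X_i - \mu_i|^3] \le C\, s_n^2$. Dividing by $s_n^{3}$ gives
\[
\frac{1}{s_n^{3}} \sum_{i=1}^n E\left[|X_i - \mu_i|^3\right] \le \frac{C}{s_n},
\]
which tends to $0$ by the hypothesis $s_n \to \infty$. The Lyapunov condition is therefore satisfied with $\delta = 1$, and Theorem~\ref{lyapunovCLT} delivers the claim.

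The only genuinely delicate point is the almost-sure boundedness of the per-coordinate error together with its uniformity in $n$: this is where the assumption that $v$ and $x$ are uniformly bounded by $M$ does the essential work, since it caps the scaling factor $\alpha$, and hence the spacing of the quantization points, regardless of the dimension. One could instead invoke the explicit second- and third-moment bounds derived earlier in this section, but the direct a.s.\ bound is cleaner and makes the domination $E[|\cdot|^3] \le C\,E[|\cdot|^2]$ transparent. The remaining hypothesis $s_n \to \infty$ is precisely what excludes the degenerate regime in which the accumulated noise fails to grow and no normalization can produce a nondegenerate Gaussian limit.
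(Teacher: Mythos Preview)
Your proposal is correct and follows essentially the same route as the paper's own proof: apply Lyapunov's CLT with $\delta=1$, use the almost-sure bound on the per-coordinate quantization error $|Q(v_i)-v_i|\le \alpha/s$ together with $|x_i|\le M$ to dominate $E[|X_i-\mu_i|^3]$ by a constant times $\sigma_i^2$, sum, and divide by $s_n^3$ to get $C/s_n\to 0$. The only cosmetic difference is in the constant: you (correctly) bound $\alpha\le 2M$, whereas the paper writes $\alpha_i\le M$; this does not affect the argument.
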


\begin{proof}
Using the same notation as theorem \ref{lyapunovCLT}, let $X_i = Q(v_i)x_i$, $\mu_i = E[X_i] = v_ix_i$. We already mentioned in \ref{uniform_quant} that these are independent random variables. We will show that the Lyapunov condition holds with $\delta=1$.

We know that 
\begin{equation}
 \operatorname{E}\left[|X_{i} - \mu_{i}|^{3}\right] =  \operatorname{E}\left[(X_i - \mu_i)^2|X_{i} - \mu_{i}|\right] \le \frac {M^2}s   \operatorname{E}\left[(X_i - \mu_i)^2\right]
\end{equation}

In fact, 
\begin{align}
|X_i - \mu_i| = |x_i||Q(v_i) - v_i| &= |x_i|\left|\alpha_i \hat Q\left(\frac{v_i - \beta_i}{\alpha_i}\right) + \beta_i - v_i\right| \le \\ &\le|x_i| \left |\alpha_i \left(\frac{v_i - \beta_i}{\alpha_i} + \frac 1s\right) + \beta_i - v_i\right| \le \\ &\le |x_i| \frac Ms \le \frac{M^2}s
\end{align} since during quantization we have bins of size $\frac 1s$, so that is the largest error we can make. Also, by hypothesis $M \ge \alpha_i, x_i $ for every $i$.

Hence 
\begin{equation}
0\le \frac{1}{s_{n}^{3}} \sum_{i=1}^{n} \operatorname{E}\left[|X_{i} - \mu_{i}|^{3}\right] \le \frac{1}{s_{n}^{3}} \frac{M^2}s \sum_{i=1}^{n} \operatorname{E}\left[(X_{i} - \mu_{i})^2)\right] = \frac{M^2}s\cdot \frac{1}{s_{n}}
\end{equation}

and since $\lim_{n \to \infty} s_n = \infty$, we have that the Lyapunov condition is satisfied. Hence
\begin{equation}
\frac{1}{s_n} \sum_{i=1}^{n} \left(X_i - \mu_i\right) = \frac{1}{s_n} \sum_{i=1}^{n} \left(Q(v_i)x_i - v_ix_i\right) =  \frac{1}{s_n} \varepsilon_n \ \xrightarrow{D}\ N(0,1)
\end{equation}

\end{proof}

\paragraph{Note about the hypothesis} The two hypothesis that were used to prove the theorem are reasonable and should be satisfied by any practical dataset. Typically we know or we can estimate the range of the values of inputs and weights, so the assumption that they don't get arbitrarily large with $n$ is satisfied. The assumption on the variance is also reasonable; in fact, $s_n^2 = \sum_{i=1}^n Var[Q(v_i)x_i]$ consists of a sum of $n$ values. While it is possible for all these values to be $0$ (if all $v_i$ are in the form $k/s$, for example, then $s_n^2 = 0$) it is unlikely that a real world dataset would present this characteristic. In fact, it suffices that there exist $\gamma > 0$ and $0 <\delta \le 1$ such that at least $\delta$-percent of $\sigma_i^2 \ge \gamma$. This implies $s_n^2 \ge \delta \gamma n \to \infty$.

\paragraph{Asymptitc normality when quantizing inputs}
Theorem \ref{main_theorem_asympt} can be easily extended to the case when also $x_i$ are quantized. The proof is almost identical; we simply have to set $X_i = Q(v_i)Q(x_i)$ and use the independence of $Q(x_i)$ and $Q(v_i)$. For completeness, we report the statement of the theorem : 

\begin{theorem}
\label{extended_main_theorem_asympt}
Let $v, x$ be two vectors with $n$ elements. Let $Q$ be the uniform quantization function with $s$ levels defined in \ref{uniform_quant} and define $s_n^2 = \sum_{i=1}^n Var[Q(v_i)Q(x_i)]$. If the elements of $v, x$ are uniformly bounded by $M$ \footnote{i.e. there exists a constant $M$ such that for all $n$, $|v_i| \le M$, $|x_i|\le M$ for all $i \in \{1, \dots, n\}$} and $\lim_{n \to \infty} s_n = \infty$, then 

\begin{equation}
\sum_{i=1}^n Q(v_i) Q(x_i) = \sum_{i=1}^n v_i x_i  + \varepsilon_n
\end{equation}

with $E[\varepsilon_n] = 0$ and 

\begin{equation}
\lim_{n \to \infty} \frac 1{s_n} \varepsilon_n \xrightarrow{D}\ N(0,1)
\end{equation}

\end{theorem}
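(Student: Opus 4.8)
The plan is to mimic the proof of Theorem \ref{main_theorem_asympt} almost verbatim, substituting $X_i = Q(v_i)Q(x_i)$ and again invoking the Lyapunov Central Limit Theorem (Theorem \ref{lyapunovCLT}) with $\delta = 1$. First I would record the two structural facts that put the $X_i$ in the scope of Lyapunov's theorem. Since each coordinate is quantized independently, the variables $X_1, \dots, X_n$ are independent; and since, for a fixed $i$, the rounding randomness in $Q(v_i)$ and in $Q(x_i)$ is drawn independently, the unbiasedness of $Q$ established in Section \ref{math_properties_uniform} yields
$$\mu_i := E[X_i] = E[Q(v_i)]\,E[Q(x_i)] = v_i x_i,$$
so that $\varepsilon_n = \sum_i (X_i - \mu_i)$ is automatically zero-mean. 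This reduces the statement to verifying the Lyapunov condition for the family $\{X_i\}$.

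The only genuinely new ingredient is the almost-sure bound on $|X_i - \mu_i|$, which in the original proof was immediate because $x_i$ was left unquantized. Here I would split the error telescopically,
$$Q(v_i)Q(x_i) - v_i x_i = \bigl(Q(v_i) - v_i\bigr)Q(x_i) + v_i\bigl(Q(x_i) - x_i\bigr),$$
and bound the two summands separately. Each per-coordinate quantization error is at most one bin width, $|Q(v_i) - v_i| \le \alpha_i/s \le M/s$ and $|Q(x_i) - x_i| \le \alpha_i/s \le M/s$, exactly as in the single-quantization argument. Crucially, because linear scaling confines $\hat Q$ to $[0,1]$, the quantized value $Q(x_i)$ never leaves the original range $[\min_j x_j, \max_j x_j]$, so $|Q(x_i)| \le M$. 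Combining these facts gives $|X_i - \mu_i| \le M \cdot (M/s) + (M/s)\cdot M = 2M^2/s$, the exact analogue of the $M^2/s$ bound used before.

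With this bound in hand the remainder is routine and identical to Theorem \ref{main_theorem_asympt}. I would write
$$E\bigl[|X_i - \mu_i|^3\bigr] = E\bigl[(X_i - \mu_i)^2 |X_i - \mu_i|\bigr] \le \frac{2M^2}{s}\,E\bigl[(X_i - \mu_i)^2\bigr],$$
sum over $i$, and use $\sum_i E[(X_i - \mu_i)^2] = s_n^2$ to obtain
$$0 \le \frac{1}{s_n^3}\sum_{i=1}^n E\bigl[|X_i - \mu_i|^3\bigr] \le \frac{2M^2}{s}\cdot\frac{1}{s_n}.$$
Since $s_n \to \infty$ by hypothesis, the right-hand side vanishes, the Lyapunov condition holds with $\delta = 1$, and Theorem \ref{lyapunovCLT} delivers $\tfrac{1}{s_n}\varepsilon_n \xrightarrow{D} N(0,1)$.

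The main obstacle is precisely the bounding step of the second paragraph: one must ensure that quantizing the second factor does not inflate the product error. This is why the observation that $Q(x_i)$ remains within the original value range (hence $|Q(x_i)| \le M$) is essential — without it the first telescoped term would scale with $Q(x_i)$ rather than with the uniform bound $M$, and the clean $O(M^2/s)$ estimate would fail. Everything else transfers unchanged from the single-quantization case.
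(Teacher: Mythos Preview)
Your proposal is correct and follows exactly the approach the paper indicates: set $X_i = Q(v_i)Q(x_i)$, use the independence of $Q(v_i)$ and $Q(x_i)$ to get $\mu_i = v_i x_i$, and rerun the Lyapunov argument with $\delta = 1$. The paper itself does not spell out the $|X_i - \mu_i|$ bound in the two-quantization case, and your telescoping step together with the observation that $Q(x_i)$ stays in the original value range (hence $|Q(x_i)| \le M$) is precisely the missing detail needed to make the ``almost identical'' claim rigorous.
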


\end{document}